\documentclass[10pt]{article}

\usepackage[T1]{fontenc}
\usepackage[utf8]{inputenc}
\usepackage{lmodern}
\usepackage{amsmath,amssymb,amsthm}
\usepackage{mathtools}
\usepackage[margin=1in]{geometry}
\usepackage{graphicx}
\usepackage{booktabs}
\usepackage{enumitem}
\usepackage{microtype}
\usepackage[dvipsnames,svgnames,x11names]{xcolor}
\usepackage[hyphens]{url}
\usepackage[unicode]{hyperref}
\usepackage{bookmark}
\hypersetup{
  pdftitle={ModularPhaseNet: Finite-Cyclic Phase Geometry for Computable Semantic Hierarchy, Direction, and Context Consistency in Standard Transformers},
  pdfauthor={Kiyotaka Kasubuchi; Kazuo Fukiya},
  pdflang={en-US},
  colorlinks=true,
  linkcolor={blue},
  filecolor={Maroon},
  citecolor={Blue},
  urlcolor={blue}}

\newtheoremstyle{modplain}{1.0em}{1.0em}{\normalfont}{}{\bfseries}{.}{ }{}
\theoremstyle{modplain}
\newtheorem{theorem}{Theorem}

\setlist{itemsep=1pt,topsep=3pt,parsep=0pt}
\DeclareMathOperator*{\argmaxop}{arg\,max}
\DeclareMathOperator*{\argminop}{arg\,min}
\newcommand{\Fp}{\mathbb{F}_{p}^{\times}}
\newcommand{\dG}{d_{G}}
\newcommand{\dq}{d_{q}}

\title{ModularPhaseNet: Finite-Cyclic Phase Geometry for Computable
Semantic Hierarchy, Direction, and Context Consistency in Standard
Transformers}
\author{Kiyotaka Kasubuchi \and Kazuo Fukiya}
\date{August 11, 2026}

\begin{document}
\maketitle

\begin{abstract}
We propose ModularPhaseNet, a classical and integer-computable
discretization of the continuous complex phase geometry introduced in
QuantumPhaseNet. The real-valued hidden states of a standard Transformer
are retained, while only an auxiliary phase channel is quantized into a
cyclic subgroup $G=\langle g\rangle\subset\Fp$ of order $q\mid(p-1)$. A
continuous phase $e^{i\phi}$ is represented by $z=g^{a}\bmod p$; phase
composition becomes group multiplication, relative phase becomes group
division, conceptual hierarchy is induced by a filtration of cyclic
quotients, semantic direction is represented by oriented relative group
elements, and contextual consistency is measured by gauge-invariant
cycle holonomy. The method introduces three components into an otherwise
standard Transformer: a finite-phase encoder, a quotient-filtration
hierarchy module, and a group-valued connection module. Their outputs
enter self-attention as real-valued bias terms. Training uses
distributions in the real group algebra or straight-through
Gumbel-Softmax, whereas inference uses exact modular exponentiation and
precomputed tables. No quantum hardware, complex-valued matrix
multiplication, or discrete-logarithm computation is required. We prove
quantization-distortion bounds, nesting of quotient-induced partitions,
gauge invariance, a discrete integrability result for flat connections,
and boundedness of the resulting attention output. The central empirical
hypothesis is that these exact discrete invariants improve hierarchy
recovery, discourse alignment, contradiction detection, and calibrated
hallucination-risk prediction under a controlled compute budget. This
paper reports the theory together with a pre-registered evaluation plan;
the experiments described in Section~\ref{sec:plan} have not yet been
carried out, and no empirical result is claimed here.
\end{abstract}

\noindent\textbf{Keywords:} Transformer; QuantumPhaseNet; finite cyclic
group; modular exponentiation; discrete phase geometry; Cayley graph;
concept hierarchy; gauge connection; holonomy; hallucination;
calibration

\section{Positioning of This Work}\label{sec:positioning}

\subsection{Continuity from WavePhaseNet and QuantumPhaseNet}

WavePhaseNet outlined a way to make a semantic conceptual hierarchy
structure (SCHS) explicit in a form that a Transformer can handle: a
discrete Fourier transform along the sequence direction decomposes the
low frequencies into the global theme and intent of a text, and the high
frequencies into local syntax and surface expression (Kasubuchi and
Fukiya 2026a). QuantumPhaseNet extended that construction to a semantic
manifold, a complex vector bundle, a gauge connection, a covariant phase
rate, a semantic wavelength, a connection Laplacian, quantum spectral
selection, and WavePhase Attention (Kasubuchi and Fukiya 2026b).

In the continuous representation of QuantumPhaseNet, a local semantic
state is written schematically as
\begin{equation}\label{eq:cont-state}
\psi(t)=A(t)\,e^{i\phi(t)}u(t),
\qquad
\Omega_{\gamma}(t)=\operatorname{Im}\bigl\langle\hat{\psi}(t),\,
D_{t}\hat{\psi}(t)\bigr\rangle ,
\end{equation}
and the semantic wavelength is defined by
\begin{equation}\label{eq:cont-wavelength}
\lambda_{\gamma}(t)=\frac{2\pi\,c_{s}(\gamma(t),t)}
{\lvert\Omega_{\gamma}(t)\rvert+\varepsilon_{\omega}} .
\end{equation}
A long wavelength serves as a proxy for a higher-level, more abstract
concept, and phase coherence under parallel transport, discourse
direction, geodesic distance, and evidence support are added to the
attention logit.

The aim of the present paper is to make the phase channel of
\eqref{eq:cont-state} and \eqref{eq:cont-wavelength} finite, so that it
can be embedded in a Transformer running on ordinary GPUs or CPUs
without a quantum Fourier transform (QFT), quantum phase estimation
(QPE), an oracle, or amplitude amplification. The real amplitudes, the
embeddings, the query, key and value projections, and the softmax are
all retained unchanged. What is made finite is not the whole semantic
state but an interpretable auxiliary phase channel.

\subsection{Central Hypotheses}\label{sec:hypotheses}

The central hypotheses are the following.

\begin{enumerate}
\item The role played by a continuous phase can be approximated by an
element of a finite cyclic group together with its relative relations.
\item A concept hierarchy can be represented by the nested partition
generated by the quotient maps of a cyclic group, together with a depth
variable for each concept, in much the same way that modular
exponentiation is used in integer factorization.
\item Discourse direction can be represented by an oriented group
element attached to an ordered pair of tokens or clauses.
\item Contextual consistency can be tested by asking whether the cycle
product of independently predicted edge connections returns to the
identity.
\item Adding these corrections to attention makes hierarchy, direction
and consistency explicitly computable inside the same real-valued
arithmetic that a standard Transformer already uses.
\item If these invariants are combined with evidence support and
semantic uncertainty and the result is calibrated, the empirical risk of
hallucination, in the sense of factual error, contradiction and topical
drift, may be reduced.
\end{enumerate}

\subsection{Scope of Claims}\label{sec:scope}

The claim that this paper makes, and the one it is prepared to defend,
is the following.

\begin{quote}
We have not replaced the entire machinery of the continuous complex
phase of QuantumPhaseNet by classical computation. We have extracted
three verifiable structures---semantic hierarchy, oriented semantic
transition, and contextual cycle consistency---as quotient maps and
group-valued connections on a finite cyclic group, and embedded them
into the attention bias and the calibrated risk of a standard
Transformer.
\end{quote}

Conversely, this first version deliberately avoids the following five
claims.

\begin{enumerate}
\item That making the phase finite by itself necessarily eliminates
hallucination.
\item That modular exponentiation has intrinsically greater
representational power than a plain residue index.
\item That a small index of the number theoretic transform (NTT)
automatically represents a higher-level concept.
\item That amplitude amplification equivalent to quantum computation has
been realized classically.
\item That the method is asymptotically faster than dense attention.
\end{enumerate}

Accordingly, we do not claim that hallucination disappears theoretically
merely because a finite group is used. What we propose is a framework
that introduces inconsistencies of semantic hierarchy, of direction, and
of local and global context into a Transformer as reproducible discrete
invariants, and that empirically calibrates a risk which includes
insufficient evidence and semantic uncertainty. Theorems about internal
consistency and empirical hypotheses that can be refuted on real data
are kept strictly separate throughout.

\subsection{Main Contributions}

The contributions of this paper can be organized into the following
eight points.

\begin{enumerate}
\item We define a finite phase quantization that maps a continuous
complex phase $e^{i\phi}$ to a modular exponential $g^{a}\bmod p$, and
we bound the resulting approximation error with respect to distance on
the circle.
\item We add a depth variable to the quotient filtration of a cyclic
group and thereby define an asymmetric ancestor--descendant relation
between a superordinate and a subordinate concept.
\item We represent a multi-axis semantic hierarchy by several cyclic
group heads.
\item We define a group-valued connection, a covariant residual, and a
cycle holonomy on a token graph.
\item We show a discrete integrability result: if every cycle is flat,
then a global assignment of semantic phases exists.
\item We integrate hierarchy, direction, contextual consistency, and
evidence into real-valued logit corrections of standard attention.
\item We separate the differentiable group-algebra representation used
during training from the exact modular-exponential representation used
at inference time.
\item We separate theoretical internal consistency from the falsifiable
experimental hypothesis concerning the reduction of hallucination.
\end{enumerate}

\section{Relation to Prior Work}\label{sec:related}

The standard Transformer learns dependencies through inner products
(Vaswani et al.~2017). FNet uses the Fourier transform as a token mixer
(Lee-Thorp et al.~2022), and rotary position embedding (RoPE) injects
relative positional dependence into attention by means of rotations (Su
et al.~2023). These are close in spirit to the present work, but the
phase used here carries more than position: it carries a conceptual
address, a hierarchical depth, an oriented semantic transition, and
cycle consistency.

The number theoretic transform (NTT) realizes the same sum--product
structure as the DFT over a finite field, and it can compute cyclic
convolutions without complex numbers (Pedrouzo-Ulloa,
Troncoso-Pastoriza, and P\'erez-Gonz\'alez 2017). A finite field,
however, has no natural order relation, absolute value, or energy
ordering comparable to those of the reals or the complex numbers. One
therefore cannot say automatically that ``a small index $k$ is
semantically low-frequency.'' We treat the NTT as an implementation
device for the cyclic convolution of group distributions or for
consistency computations, and we obtain the grounds for a semantic
hierarchy from the quotient-group filtration and from supervision
instead.

It has already been reported that neural networks that learn modular
arithmetic acquire Fourier-like or block-circulant features (Gromov
2023). The present paper, by contrast, does not solve modular arithmetic
itself; it uses finite group structure as an auxiliary inductive bias
for natural language representation.

Order Embeddings (Vendrov et al.~2016) and Poincar\'e Embeddings (Nickel
and Kiela 2017) are strong baselines for hierarchical representation.
The regular cyclic quotient hierarchy used here does not represent an
arbitrary tree without distortion. We compare against these baselines in
the evaluation plan and consider the possibility of combining multiple
heads, unused codes, and hyperbolic embeddings.

\subsection{Correspondence Between the Continuous and Finite-Group
Formulations}

\begin{figure}[htbp]
\centering
\includegraphics[width=\linewidth]{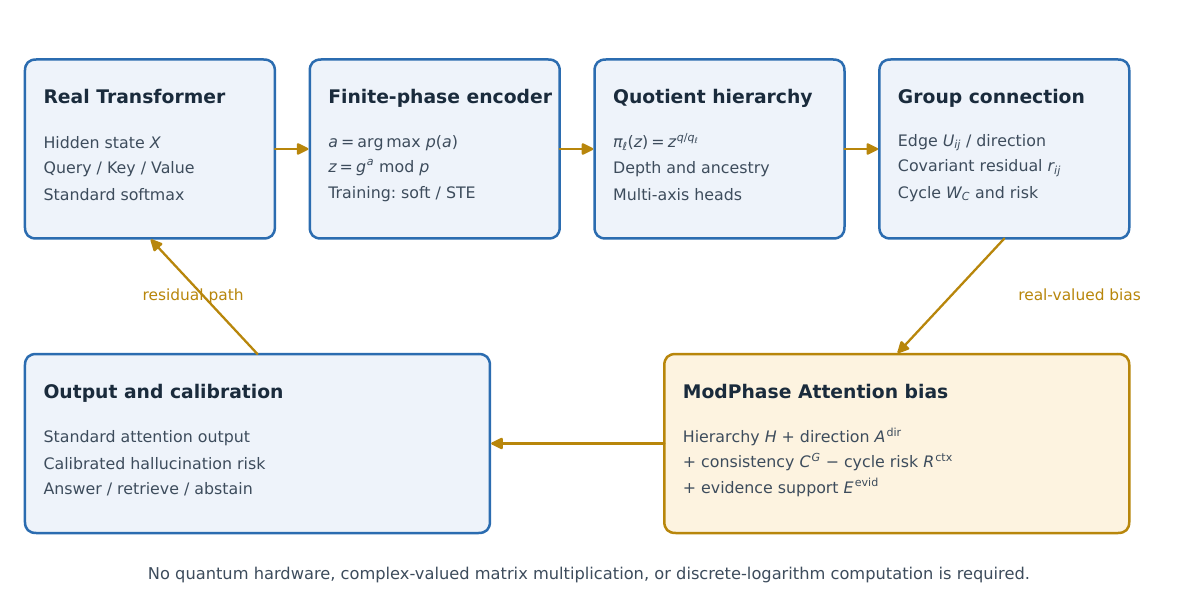}
\caption{Overall structure of ModularPhaseNet. The real-valued
Transformer is retained, and only the results computed in the finite
group channel are returned to attention as a bias and to the calibrated
risk.}
\label{fig:overview}
\end{figure}

Figure~\ref{fig:overview} summarizes the overall structure, and the
correspondence can be organized as follows.

\begin{itemize}
\item \textbf{Complex phase.} $e^{i\phi}$ is mapped to the group phase
$z=g^{a}\bmod p$ and implemented with an integer index and a table of
modular exponentials.
\item \textbf{Phase difference.} This becomes the relative group element
$z_{i}^{-1}z_{j}$, obtained from an index difference or from modular
multiplication and division.
\item \textbf{Semantic wavelength.} This becomes the reciprocal of the
group word distance per step; an integer distance and a real-valued
scale are used together.
\item \textbf{Concept hierarchy.} This is computed from the quotient
maps $\pi_{\ell}\colon G\to G_{\ell}$ together with depth, using
residues, modular exponentials, and comparisons.
\item \textbf{Parallel transport.} The edge connection $U_{ij}\in G$ is
predicted by an edge classifier.
\item \textbf{Curvature and holonomy.} The cycle product
$W_{C}=\prod U_{ij}$ is tested by modular multiplication.
\item \textbf{Discourse direction.} This is represented by a target
group element $D_{S}\in G$ and an oriented group kernel.
\item \textbf{QPE, oracle, and amplitude amplification.} These are
replaced by a learned discrete selector and a risk gate, implemented
with a softmax, a straight-through estimator, and lookups.
\item \textbf{WavePhase Attention.} This becomes ModPhase Attention,
which adds real-valued biases to the standard logit.
\end{itemize}

\section{Finite Cyclic Phase Space}\label{sec:phase-space}

\subsection{Group Parameters}\label{sec:group-params}

Let $p$ be a prime and let $q\mid(p-1)$. Choose an element $g$ of $\Fp$
of order $q$ and set
\begin{equation}\label{eq:group}
G=\langle g\rangle=\{1,g,g^{2},\dots,g^{q-1}\}\subset\Fp ,
\end{equation}
so that $G\cong C_{q}\cong\mathbb{Z}/q\mathbb{Z}$; the existence of such
an element is the standard fact that $\Fp$ is cyclic (Ireland and Rosen
1990). In the implementation we keep the exponent label
$a\in\{0,\dots,q-1\}$ and the group element $z=g^{a}\bmod p$ together as
a pair. Consequently there is never any need to solve the discrete
logarithm $a=\log_{g}z$ at inference time.

The quantization of a continuous phase $\phi\in[0,2\pi)$ is defined by
\begin{equation}\label{eq:quantize}
Q_{q}(\phi)=\operatorname{round}\!\left(\frac{q\phi}{2\pi}\right)
\bmod q,
\qquad
\eta_{q}(\phi)=g^{\,Q_{q}(\phi)}\bmod p ,
\end{equation}
and composition, inversion, and relative phase are computed as
\begin{equation}\label{eq:compose}
z(\phi_{1})z(\phi_{2})\bmod p,
\qquad
z(\phi)^{-1}=z(-\phi),
\qquad
\Delta z_{ij}=z_{i}^{-1}z_{j} .
\end{equation}

\subsection{The Precise Meaning of ``Discrete Phase Geometry''}

Simply endowing a finite set with the usual discrete topology yields an
almost trivial topological structure. What is nontrivial here are the
following three ingredients.

\begin{enumerate}
\item The Cayley graph generated by the generating set
$S=\{g,g^{-1}\}$.
\item The word metric on that Cayley graph.
\item The filtration by subgroups and quotient groups, together with the
group-valued connection on the token graph.
\end{enumerate}

We define the circular distance between exponents by
\begin{equation}\label{eq:word-metric}
\dq(a,b)=\min\{\lvert a-b\rvert,\;q-\lvert a-b\rvert\},
\qquad
\dG\!\left(g^{a},g^{b}\right)=\dq(a,b).
\end{equation}
The normalized distance is $\bar{d}_{G}=2\dG/q\in[0,1]$. Because this
depends on the choice of generator $g$, the model specification fixes
the generator, and the sensitivity to a change of generator is reported
as an ablation.

\subsection{Quantization Distortion}\label{sec:distortion}

Let $d_{S^{1}}(\phi,\psi)\in[0,\pi]$ denote geodesic distance on the
circle.

\begin{theorem}[Distortion of finite phase quantization]
\label{thm:distortion}
For the nearest-neighbor quantization of \eqref{eq:quantize},
\begin{equation}\label{eq:distortion}
\left\lvert\, d_{S^{1}}(\phi,\psi)
-\frac{2\pi}{q}\,\dG\!\left(\eta_{q}(\phi),\eta_{q}(\psi)\right)
\right\rvert\le\frac{2\pi}{q} .
\end{equation}
Moreover, the rounding defect of composition
\begin{equation}\label{eq:defect}
\delta_{Q}=Q_{q}(\phi+\psi)-Q_{q}(\phi)-Q_{q}(\psi)\pmod{q}
\end{equation}
satisfies $\delta_{Q}\in\{-1,0,1\}$ for a suitable choice of
representatives.
\end{theorem}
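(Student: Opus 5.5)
The plan is to lift everything to the circle $\mathbb{R}/2\pi\mathbb{Z}$ and compare the word metric with circle distance through the integer exponents. Write $x=q\phi/2\pi$ and $y=q\psi/2\pi$, and set $a=\operatorname{round}(x)$ and $b=\operatorname{round}(y)$, so that $\lvert x-a\rvert\le 1/2$ and $\lvert y-b\rvert\le 1/2$. Since $g$ has order exactly $q$, the map $a\bmod q\mapsto g^{a}\bmod p$ is injective. Hence $\dG(\eta_{q}(\phi),\eta_{q}(\psi))=\dq(Q_{q}(\phi),Q_{q}(\psi))$ is well defined, and it equals the distance in $\mathbb{R}/q\mathbb{Z}$ between the classes of $a$ and $b$. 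The reduction $\bmod q$ in \eqref{eq:quantize} does no harm, because $\dq$ depends only on residues.

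The key observation is that $\frac{q}{2\pi}d_{S^{1}}(\phi,\psi)$ is the distance in $\mathbb{R}/q\mathbb{Z}$ between the classes of $x$ and $y$. Call this metric $\rho(u,v)=\min_{k\in\mathbb{Z}}\lvert u-v-kq\rvert$. It restricts to $\dq$ on integers. Since $\rho$ is a pseudometric on $\mathbb{R}$ (it is the quotient metric), the triangle inequality gives
\[
\lvert\rho(x,y)-\rho(a,b)\rvert\le\rho(x,a)+\rho(y,b)\le\tfrac12+\tfrac12=1 .
\]
Multiplying by $2\pi/q$ yields \eqref{eq:distortion}. The one point needing care is checking that $\rho$ is indeed the quotient metric of $\mathbb{R}/q\mathbb{Z}$ and that it matches both $d_{S^{1}}$ after rescaling and $\dq$ on integers. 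This amounts to unwinding the definitions $\min\{\lvert a-b\rvert,q-\lvert a-b\rvert\}$ for $a,b\in\{0,\dots,q-1\}$, and the analogous formula on the circle.

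For the defect, the sum $\phi+\psi$ is understood modulo $2\pi$, and so again only residues matter. Let $c=\operatorname{round}(x+y)$. Then $c-a-b$ is an integer with
\[
\lvert c-a-b\rvert\le\lvert c-(x+y)\rvert+\lvert x-a\rvert+\lvert y-b\rvert\le\tfrac32,
\]
so $c-a-b\in\{-1,0,1\}$. Reducing $\phi+\psi$ into $[0,2\pi)$ shifts $x+y$ by $0$ or $q$, which changes $c$ only by a multiple of $q$. Therefore $\delta_{Q}\equiv c-a-b\pmod q$, and that integer is the representative claimed in the statement.

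The main obstacle is tie-breaking in $\operatorname{round}$ at half-integers: the bound $\lvert c-(x+y)\rvert\le 1/2$ must hold for whatever convention is fixed. It does, since any rounding convention satisfies $\lvert\operatorname{round}(t)-t\rvert\le 1/2$. A second point, for $q\le 2$, is that $\{-1,0,1\}$ may collapse modulo $q$; this is harmless because we exhibit an integer representative.
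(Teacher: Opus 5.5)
Your proof is correct and follows the paper's own sketch: the quantization error is at most half a tick ($\pi/q$) per angle, the triangle inequality gives the distance bound (you make this precise via the quotient metric on $\mathbb{R}/q\mathbb{Z}$), and a $3/2$ bound forces the integer rounding defect into $\{-1,0,1\}$. One small repair is needed in the defect argument: a shift by $q$ need not change $\operatorname{round}$ by exactly $q$ (for example, ties-to-even with odd $q$), so instead apply $\lvert\operatorname{round}(t)-t\rvert\le\tfrac12$ directly to the reduced argument $t=x+y-kq$, with $k\in\{0,1\}$, and take $c'+kq-a-b$ as your integer representative, where $c'=\operatorname{round}(t)$.
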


\begin{proof}[Proof sketch]
The quantization error of each angle is at most $\pi/q$, so by the
triangle inequality the error in the distance between two points is at
most $2\pi/q$. For nearest-neighbor rounding,
$\operatorname{round}(x+y)-\operatorname{round}(x)-\operatorname{round}(y)$
takes one of the values $-1$, $0$, or $1$.
\end{proof}

\subsection{Discrete Semantic Wavelength}

Let $z_{t}=g^{a_{t}}$ be the group phase at time or token step $t$. In
the minimal version, which uses no connection, the discrete phase
velocity is
\begin{equation}\label{eq:phase-velocity}
\nu_{t}=\dG(z_{t},z_{t+1})/\Delta t ,
\end{equation}
and the discrete semantic wavelength corresponding to
\eqref{eq:cont-wavelength} is defined by
\begin{equation}\label{eq:disc-wavelength}
\lambda_{t}^{(q)}=\frac{q\,c_{s}(t)}{\dG(z_{t},z_{t+1})+\varepsilon_{q}} .
\end{equation}
Because $q$ ticks correspond to one full revolution of $2\pi$,
\eqref{eq:disc-wavelength} assigns a longer wavelength to a smaller
phase change. We note explicitly, however, that the word metric changes
when the generator is changed, and that the correspondence between a
long wavelength and a higher-level concept is an empirical hypothesis.

\section{Semantic Concept Hierarchy from a Quotient Filtration}
\label{sec:hierarchy}

\subsection{Nested Quotient Maps}

Choose a sequence of orders satisfying
\begin{equation}\label{eq:filtration}
1=q_{0}\mid q_{1}\mid\cdots\mid q_{L}=q .
\end{equation}
Setting $g_{\ell}=g^{\,q/q_{\ell}}$, the group
$G_{\ell}=\langle g_{\ell}\rangle$ is cyclic of order $q_{\ell}$. The
projections are
\begin{equation}\label{eq:projection}
\pi_{\ell}\colon G\to G_{\ell},
\qquad
\pi_{\ell}(z)=z^{\,q/q_{\ell}}\bmod p ,
\end{equation}
which in the exponent representation reads
$\pi_{\ell}(g^{a})=g_{\ell}^{\,a\bmod q_{\ell}}$.

\begin{theorem}[Nesting of the hierarchical partition]
\label{thm:nesting}
Let $q_{\ell}\mid q_{m}$ with $\ell\le m$. If $\pi_{m}(x)=\pi_{m}(y)$,
then $\pi_{\ell}(x)=\pi_{\ell}(y)$. Consequently the partitions into
equivalence classes generated by $\pi_{0},\pi_{1},\dots,\pi_{L}$ are
monotonically refined from coarse concepts to fine concepts.
\end{theorem}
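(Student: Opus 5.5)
The plan is to work entirely in the exponent representation and reduce the statement to divisibility of integers. Write $x=g^{a}$ and $y=g^{b}$ with $a,b\in\{0,\dots,q-1\}$; since the implementation keeps the pair $(a,z)$, no discrete logarithm is needed. By \eqref{eq:projection}, $\pi_{m}(x)=g^{a q/q_{m}}$ and $\pi_{m}(y)=g^{b q/q_{m}}$. The element $g$ has order exactly $q$, so $g^{u}=g^{v}$ holds iff $q\mid u-v$. Hence $\pi_{m}(x)=\pi_{m}(y)$ is equivalent to
\[
q \mid (a-b)\,\frac{q}{q_{m}} ,
\]
that is, to $q_{m}\mid a-b$, or $a\equiv b \pmod{q_m}$. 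This agrees with the exponent form $\pi_{m}(g^{a})=g_{m}^{\,a\bmod q_{m}}$, because $g_{m}$ has order $q_{m}$.

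The implication then follows in one line. Since $q_{\ell}\mid q_{m}$, the congruence $q_{m}\mid a-b$ gives $q_{\ell}\mid a-b$, which by the same equivalence with $m$ replaced by $\ell$ is $\pi_{\ell}(x)=\pi_{\ell}(y)$.

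An alternative, group-theoretic check avoids exponents altogether. Since $(q/q_{\ell})=(q/q_{m})\cdot(q_{m}/q_{\ell})$, we have $\pi_{\ell}=\rho_{m\ell}\circ\pi_{m}$, where $\rho_{m\ell}(w)=w^{q_{m}/q_{\ell}}$ is a map $G_{m}\to G_{\ell}$. So $\pi_{\ell}$ factors through $\pi_{m}$, and equality after $\pi_{m}$ forces equality after $\pi_{\ell}$. I would mention this factorization because it makes the word ``refinement'' transparent.

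For the consequence, let $\mathcal{P}_{\ell}$ be the partition of $G$ into the fibers of $\pi_{\ell}$. The implication says that each fiber of $\pi_{m}$ lies inside a single fiber of $\pi_{\ell}$, so $\mathcal{P}_{m}$ refines $\mathcal{P}_{\ell}$ whenever $\ell\le m$. Applying this along the chain \eqref{eq:filtration} gives the monotone chain $\mathcal{P}_{0}\preceq\cdots\preceq\mathcal{P}_{L}$. The endpoints are consistent with this: $q_{0}=1$ makes $\pi_{0}$ constant, so $\mathcal{P}_{0}$ is a single class, and $q_{L}=q$ makes $\pi_{L}$ the identity, so $\mathcal{P}_{L}$ consists of singletons. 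One can also count classes: $\pi_{\ell}$ is a surjective homomorphism onto $G_{\ell}$ with kernel $\langle g^{q_{\ell}}\rangle$ of order $q/q_{\ell}$, so $\mathcal{P}_{\ell}$ has $q_{\ell}$ classes, each of size $q/q_{\ell}$.

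There is no real obstacle here. The only point needing care is the equivalence $g^{u}=g^{v}\iff q\mid u-v$, which uses that $g$ has order exactly $q$ in $\Fp$, together with the translation of $\pi_{m}$-equality into the congruence $a\equiv b\pmod{q_m}$. It is also worth noting that the hypothesis actually used is $q_{\ell}\mid q_{m}$, which the filtration guarantees for $\ell\le m$; the order $\ell\le m$ by itself would not suffice.
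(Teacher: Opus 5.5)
Your proof is correct and follows the paper's argument: you reduce $\pi_m(x)=\pi_m(y)$ to the congruence $a\equiv b \pmod{q_m}$ and then use $q_\ell\mid q_m$. You also justify that translation from the fact that $g$ has order exactly $q$, which the paper takes for granted, and your factorization $\pi_\ell=\rho_{m\ell}\circ\pi_m$ is a valid alternative view of the same refinement.
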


\begin{proof}
Write $x=g^{a}$ and $y=g^{b}$. The equality $\pi_{m}(x)=\pi_{m}(y)$
means $a\equiv b\pmod{q_{m}}$. Since $q_{\ell}\mid q_{m}$, we obtain
$a\equiv b\pmod{q_{\ell}}$, hence $\pi_{\ell}(x)=\pi_{\ell}(y)$.
\end{proof}

\subsection{Depth and the Asymmetric Ancestor Relation}

The group phase alone can express that two concepts belong to the same
branch, but not which of them is superordinate. We therefore assign to a
concept $c$ both a phase $z_{c}\in G$ and a depth
$\ell_{c}\in\{0,\dots,L\}$, and define the hierarchical address by
\begin{equation}\label{eq:address}
A(c)=\pi_{\ell_{c}}(z_{c})\in G_{\ell_{c}} .
\end{equation}
We then say that a concept $u$ is an ancestor of $v$, written
$u\preceq v$, when
\begin{equation}\label{eq:ancestor}
u\preceq v
\iff
\ell_{u}\le\ell_{v}
\quad\text{and}\quad
\pi_{\ell_{u}}(z_{v})=A(u).
\end{equation}
Because depth and the quotient address are used together, the relation
is asymmetric.

\subsection{Soft Hierarchical Representation}

During training, each token or concept $i$ emits a phase distribution
$\mathbf{p}_{i}\in\Delta^{q-1}$ and a depth distribution
$\mathbf{r}_{i}\in\Delta^{L}$. Folding to level $\ell$ is defined by
\begin{equation}\label{eq:folding}
(P_{\ell}\mathbf{p}_{i})(s)
=\sum_{a\equiv s\ (\mathrm{mod}\ q_{\ell})}\mathbf{p}_{i}(a),
\qquad s=0,\dots,q_{\ell}-1 ,
\end{equation}
and the soft ancestor score is given by
\begin{equation}\label{eq:soft-ancestor}
H(u\to v)
=\sum_{\ell\le m}\mathbf{r}_{u}(\ell)\,\mathbf{r}_{v}(m)\,
\bigl\langle P_{\ell}\mathbf{p}_{u},\,P_{\ell}\mathbf{p}_{v}
\bigr\rangle\in[0,1].
\end{equation}
In the hard one-hot limit this coincides with the indicator function of
\eqref{eq:ancestor}.

\subsection{Multi-Axis Hierarchy}

Corresponding to the multi-axis wavelengths of QuantumPhaseNet, we let
$H$ heads carry different cyclic groups $G^{(h)}$ and depths
$\ell^{(h)}$. Axes such as \emph{is-a}, \emph{part-of}, causal
abstraction, function, and attribute are assigned to different heads,
and the scores are combined as
\begin{equation}\label{eq:multi-axis}
H^{\mathrm{multi}}_{uv}=\sum_{h=1}^{H}w_{h}H^{(h)}(u\to v),
\qquad
w_{h}\ge0,\quad\sum_{h}w_{h}=1 .
\end{equation}
The meaning of each axis is identified by supervised labels, contrastive
learning, or sparsity regularization.

\section{Semantic Direction and the Finite-Group Gauge Connection}
\label{sec:direction}

\subsection{Independent Edge Connections}

Construct a directed graph $\mathcal{G}=(V,E)$ whose vertices are
tokens, sentences, or propositions. Assign to each vertex a phase
$z_{i}\in G$ and to each edge $(j\to i)$ a connection representing
parallel transport,
\begin{equation}\label{eq:connection}
U_{ij}\in G,\qquad U_{ji}=U_{ij}^{-1}.
\end{equation}
What matters is that $U_{ij}$ is \emph{not} simply defined as
$z_{i}z_{j}^{-1}$ but is predicted independently by an edge encoder. If
one always set $U_{ij}=z_{i}z_{j}^{-1}$, every cycle product would be
identically equal to the identity and the contextual consistency loss
would carry no information.

For a local gauge $h_{i}\in G$ the transformation law is
\begin{equation}\label{eq:gauge}
z_{i}\mapsto z_{i}'=h_{i}z_{i},
\qquad
U_{ij}\mapsto U_{ij}'=h_{i}U_{ij}h_{j}^{-1}.
\end{equation}

\subsection{Covariant Phase Residual}

The covariant residual of the edge $(j\to i)$ is
\begin{equation}\label{eq:residual}
r_{ij}=z_{i}^{-1}U_{ij}z_{j}\in G .
\end{equation}
If $r_{ij}=1$, then the phase obtained by transporting $z_{j}$ to $i$
along the connection agrees with $z_{i}$. The consistency score is
\begin{equation}\label{eq:consistency}
C^{G}_{ij}
=\exp\!\left[-\frac{\dG(r_{ij},1)^{2}}{\sigma_{c}^{2}}\right]\in(0,1].
\end{equation}

\begin{theorem}[Local gauge invariance of the covariant residual]
\label{thm:gauge}
Under \eqref{eq:gauge} we have $r_{ij}'=r_{ij}$, and hence $C^{G}_{ij}$
is invariant.
\end{theorem}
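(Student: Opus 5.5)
The plan is to substitute the transformation law \eqref{eq:gauge} directly into the definition \eqref{eq:residual} and simplify using the fact that $G=\langle g\rangle$ is cyclic, hence abelian. Under the gauge change we have $z_{i}'=h_{i}z_{i}$, $z_{j}'=h_{j}z_{j}$ and $U_{ij}'=h_{i}U_{ij}h_{j}^{-1}$. Therefore
\[
r_{ij}'=(z_{i}')^{-1}U_{ij}'z_{j}'
=(h_{i}z_{i})^{-1}\,h_{i}U_{ij}h_{j}^{-1}\,h_{j}z_{j}
=z_{i}^{-1}h_{i}^{-1}h_{i}U_{ij}h_{j}^{-1}h_{j}z_{j}.
\]
Here $(h_{i}z_{i})^{-1}=z_{i}^{-1}h_{i}^{-1}$ holds in any group, so commutativity is not even needed at this step. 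The adjacent pairs $h_{i}^{-1}h_{i}$ and $h_{j}^{-1}h_{j}$ then cancel, which gives $r_{ij}'=z_{i}^{-1}U_{ij}z_{j}=r_{ij}$. All products are taken in $\Fp$, i.e.\ modulo $p$, and $G$ is closed under them. Hence $r_{ij}'$ is a well-defined element of $G$, and the identity holds as an equality of residues mod $p$.

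The second step is the invariance of $C^{G}_{ij}$. By \eqref{eq:consistency}, $C^{G}_{ij}$ depends on the edge only through $\dG(r_{ij},1)$, with $\sigma_{c}$ a fixed constant. By \eqref{eq:word-metric}, $\dG$ is a function of the pair of group elements, computed with the fixed generator $g$, and the identity $1$ is unaffected by the gauge transformation. Since $r_{ij}'=r_{ij}$, it follows that $\dG(r_{ij}',1)=\dG(r_{ij},1)$, and so $C^{G}_{ij}$ is unchanged.

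There is no substantive obstacle; the only care needed is bookkeeping. First, the gauge acts on $z$ by left multiplication and on $U$ by conjugation-type multiplication, and the ordering in \eqref{eq:residual} is exactly the one that makes the cancellation adjacent. Second, one should check compatibility with $U_{ji}=U_{ij}^{-1}$. We have $U_{ji}'=h_{j}U_{ji}h_{i}^{-1}=(h_{i}U_{ij}h_{j}^{-1})^{-1}$, so the transformed connection still satisfies the antisymmetry constraint and the statement is consistent for both orientations of each edge. Finally, I would remark that abelianness of $G$ is not required for the residual identity itself, though it is what makes $\dG(r,1)$ insensitive to conjugation if one ever uses other gauge conventions.
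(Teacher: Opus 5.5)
Your proof is correct and follows the paper's proof: substitute the gauge transformation into $r_{ij}=z_i^{-1}U_{ij}z_j$, simplify, and note that $C^{G}_{ij}$ depends only on $r_{ij}$. The paper justifies the simplification by citing that $G$ is abelian, but you are right that $(h_i z_i)^{-1}=z_i^{-1}h_i^{-1}$ makes the cancellations adjacent, so the residual identity holds in any group.
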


\begin{proof}
Because a cyclic group is abelian,
\[
(h_{i}z_{i})^{-1}\bigl(h_{i}U_{ij}h_{j}^{-1}\bigr)\bigl(h_{j}z_{j}\bigr)
=z_{i}^{-1}U_{ij}z_{j}. \qedhere
\]
\end{proof}

\subsection{Discourse Direction}

From the prompt, from summary supervision, from an evidence graph, or
from a pooling head, we obtain a target direction $D_{S}\in G$ along
which the text should proceed. The alignment between the edge direction
and the target direction is defined by
\begin{equation}\label{eq:direction}
A^{\mathrm{dir}}_{ij}
=1-\frac{2\,\dG(U_{ij},D_{S})}{\lfloor q/2\rfloor}\in[-1,1].
\end{equation}
Since $U_{ij}$ is oriented, in general
$A^{\mathrm{dir}}_{ij}\ne A^{\mathrm{dir}}_{ji}$.

The direction of the text as a whole is obtained not as a single mean
angle but as a group Fr\'echet medoid, or as a distribution, over a set
$E_{S}$ of important edges:
\begin{equation}\label{eq:medoid}
D_{S}=\argminop_{d\in G}\sum_{(i,j)\in E_{S}}\alpha_{ij}\,
\dG(U_{ij},d)^{2}.
\end{equation}
Because the set is finite, this can be computed by exhaustive search or
by histogram aggregation.

\section{Context Consistency and Cycle Holonomy}\label{sec:holonomy}

\subsection{Wilson-Loop Invariants}

For a directed cycle $C=(i_{0},i_{1},\dots,i_{m}=i_{0})$, the holonomy
along $C$ is
\begin{equation}\label{eq:wilson}
W_{C}=\prod_{t=0}^{m-1}U_{i_{t}i_{t+1}}\bmod p .
\end{equation}
Because the cyclic group is abelian, $W_{C}$ is invariant under the
local gauge transformation \eqref{eq:gauge}. The contextual cycle risk
is defined by
\begin{equation}\label{eq:cycle-risk}
R^{\mathrm{cyc}}_{C}=\left(\frac{2\,\dG(W_{C},1)}{q}\right)^{2}.
\end{equation}
A triangular cycle tests, in the smallest possible unit, whether three
local inferences can hold simultaneously at the global level.

\subsection{Discrete Integrability}

\begin{theorem}[Global integrability of a flat connection]
\label{thm:integrability}
Let $\mathcal{G}$ be a connected graph with $U_{ji}=U_{ij}^{-1}$. If
$W_{C}=1$ for every cycle $C$, then there exist vertex phases
$\{\zeta_{i}\in G\}$ such that
\begin{equation}\label{eq:integrable}
U_{ij}=\zeta_{i}\zeta_{j}^{-1}
\end{equation}
holds on every edge. The family $\{\zeta_{i}\}$ is unique up to a common
left multiplication applied to all vertices.
\end{theorem}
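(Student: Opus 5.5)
The plan is the standard integration of a flat connection along a spanning tree. Fix a root vertex $i_{0}$ and a spanning tree $T$ of the connected graph $\mathcal{G}$. For each vertex $i$ there is a unique tree path $i=v_{0},v_{1},\dots,v_{k}=i_{0}$. Set
\[
\zeta_{i}=U_{v_{0}v_{1}}U_{v_{1}v_{2}}\cdots U_{v_{k-1}v_{k}},\qquad \zeta_{i_{0}}=1 .
\]
This is the transport of the identity phase from the root to $i$. The convention $U_{ji}=U_{ij}^{-1}$ is exactly what makes this well defined in both orientations along the tree. I will also use the fact that $G$ is abelian, so the ordering of factors never matters.

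Next I verify \eqref{eq:integrable} on tree edges. If $(i,j)$ is a tree edge with $j$ the parent of $i$, then $\zeta_{i}=U_{ij}\zeta_{j}$ by construction, so $U_{ij}=\zeta_{i}\zeta_{j}^{-1}$. If instead $i$ is the parent of $j$, the same identity follows by inverting, using $U_{ij}=U_{ji}^{-1}$.

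For a non-tree edge $(i,j)$, consider the closed walk that goes from $i$ to $j$ along the edge, then from $j$ to the root along the tree, and then from the root back to $i$ along the tree. Its holonomy is $U_{ij}\,\zeta_{j}\,\zeta_{i}^{-1}$, because traversing the tree path backwards contributes the inverse product. The flatness hypothesis $W_{C}=1$ then gives $U_{ij}=\zeta_{i}\zeta_{j}^{-1}$.

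The main obstacle is that this closed walk may not be a simple cycle, since the two tree paths can share an initial segment before meeting the root. I would handle this by noting that backtracking pairs cancel, because $U_{ab}U_{ba}=1$. After cancellation, the holonomy equals the holonomy of the genuine cycle formed by the edge $(i,j)$ and the tree path between $j$ and $i$ through their lowest common ancestor. This is a simple cycle, since $(i,j)\notin T$, so the hypothesis applies to it. A degenerate edge case is a loop or a parallel edge, which gives a cycle of length $1$ or $2$; the hypothesis covers it directly.

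For uniqueness, suppose $\zeta'$ is another solution. Then on every edge $\zeta_{i}'\zeta_{i}^{-1}=\zeta_{j}'\zeta_{j}^{-1}$, again using commutativity. Connectivity propagates this equality along paths, so $\zeta_{i}'=c\,\zeta_{i}$ for a single constant $c\in G$. Conversely, any common multiplication by a constant $c$ preserves \eqref{eq:integrable}.
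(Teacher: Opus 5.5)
Your proof is correct and follows the same route as the paper's sketch. You integrate the connection outward from a root, use flatness to get path-independence (organized through a spanning tree and fundamental cycles rather than arbitrary paths), and obtain uniqueness up to a common constant from connectedness. Your backtracking-cancellation step is a useful refinement: it makes explicit the reduction from the closed walk formed by two paths to a genuine simple cycle, which the paper's phrase ``the difference between two paths forms a cycle'' leaves implicit.
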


\begin{proof}[Proof sketch]
Fix a root $r$ and set $\zeta_{r}=1$; let $\zeta_{i}$ be the product of
the connections along an arbitrary path from the root to $i$. The
difference between two paths forms a cycle, so by $W_{C}=1$ the value
does not depend on the path. Extending by one edge yields
\eqref{eq:integrable}. Any two solutions differ by the same group
element at every vertex, by connectedness.
\end{proof}

This theorem states that if local semantic transitions are not mutually
contradictory on any cycle, they can be integrated into a single global
assignment of semantic phases. It is the finite-cyclic-group counterpart
of the connection, curvature, and cohomological consistency of
QuantumPhaseNet, and it is the core result of the present construction.

\subsection{Soft Cycle Loss}

Let $\mathbf{u}_{e}\in\Delta^{q-1}$ be the learned distribution of the
edge phase, and let $*$ denote cyclic convolution on the group. The
cycle distribution is
\begin{equation}\label{eq:cycle-dist}
\mathbf{u}_{C}=\mathbf{u}_{e_{1}}*\mathbf{u}_{e_{2}}*\cdots
*\mathbf{u}_{e_{m}},
\end{equation}
and the differentiable cycle loss is
\begin{equation}\label{eq:cycle-loss}
\mathcal{L}_{\mathrm{cyc}}
=\sum_{C\in\mathcal{C}}\sum_{a=0}^{q-1}\mathbf{u}_{C}(a)
\left(\frac{2\,\dq(a,0)}{q}\right)^{2}.
\end{equation}
During training this is computed in the real group algebra; at inference
time it is checked with the modular-exponential product of hard group
elements.

\section{ModPhase Attention}\label{sec:attention}

\subsection{Finite-Phase Encoder}

Let $x_{i}^{(l)}\in\mathbb{R}^{d}$ be the real-valued hidden state of
token $i$ at layer $l$ of the Transformer. Each phase head $h$ emits
\begin{equation}\label{eq:phase-head}
s_{i,h}=W_{h}^{\phi}x_{i}+b_{h}^{\phi}\in\mathbb{R}^{q_{h}},
\qquad
\mathbf{p}_{i,h}=\operatorname{softmax}\!\left(s_{i,h}/\tau\right).
\end{equation}
Hard samples during training are drawn with a straight-through
Gumbel-Softmax, while at inference time we use
\begin{equation}\label{eq:hard-phase}
a_{i,h}=\argmaxop_{a}\ \mathbf{p}_{i,h}(a),
\qquad
z_{i,h}=g_{h}^{\,a_{i,h}}\bmod p_{h}.
\end{equation}
If the table of $\operatorname{pow}(g_{h},a,p_{h})$ is precomputed, only
an integer lookup is required at run time.

\subsection{Logit Correction}

Write the standard attention logit as
\begin{equation}\label{eq:base-logit}
\ell^{\mathrm{base}}_{ij}
=\frac{\mathbf{q}_{i}^{\top}\mathbf{k}_{j}}{\sqrt{d_{k}}}+M_{ij},
\end{equation}
where $\mathbf{q}_{i}$ and $\mathbf{k}_{j}$ are the query and key
vectors and $M_{ij}$ is the mask. The ModPhase logit is then defined by
\begin{equation}\label{eq:mp-logit}
\ell^{\mathrm{MP}}_{ij}
=\ell^{\mathrm{base}}_{ij}
+\alpha_{c}C^{G}_{ij}
+\alpha_{h}H(i\to j)
+\alpha_{d}A^{\mathrm{dir}}_{ij}
-\alpha_{r}R^{\mathrm{ctx}}_{ij}
+\alpha_{e}E^{\mathrm{evid}}_{ij}.
\end{equation}
Each coefficient $\alpha_{\bullet}\ge0$ is parameterized through a
softplus. Here $R^{\mathrm{ctx}}_{ij}$ is the mean risk over the short
cycles that contain the edge, and $E^{\mathrm{evid}}_{ij}\in[0,1]$ is
the evidence support supplied by retrieved documents, citations, a
knowledge graph, or a similar source.

The output is formed as usual:
\begin{equation}\label{eq:mp-attention}
A^{\mathrm{MP}}_{ij}
=\frac{\exp\!\left(\ell^{\mathrm{MP}}_{ij}\right)}
{\sum_{j'\,:\,M_{ij'}=0}\exp\!\left(\ell^{\mathrm{MP}}_{ij'}\right)},
\qquad
Y=A^{\mathrm{MP}}V .
\end{equation}
Since every quantity computed in the phase group enters as a
real-valued scalar bias or as a real-valued embedding attached to the
query, key and value paths, no kernel of an existing Transformer
implementation needs to be modified.

\subsection{Invariance and Boundedness}

\begin{theorem}[Local gauge invariance of the ModPhase logit]
\label{thm:logit-invariance}
If the consistency score $C^{G}_{ij}$, the quotient hierarchy score, the
direction score, the cycle risk, and the evidence score are constructed
only from gauge-invariant quantities, then \eqref{eq:mp-logit} is
invariant under the transformation \eqref{eq:gauge}.
\end{theorem}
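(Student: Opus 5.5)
The plan is to decompose the ModPhase logit \eqref{eq:mp-logit} into its six summands and check each one separately under the gauge transformation \eqref{eq:gauge}. Invariance of a real-valued affine combination then follows from invariance of every term, because the coefficients $\alpha_{\bullet}$ are softplus-parameterized constants that do not depend on the phase channel.

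\textbf{Step 1: the base logit and the evidence term.} The base logit $\ell^{\mathrm{base}}_{ij}$ in \eqref{eq:base-logit} is built only from the real hidden states through $\mathbf{q}_{i}$, $\mathbf{k}_{j}$ and the mask $M_{ij}$. The gauge transformation \eqref{eq:gauge} acts only on the auxiliary group channel $(z_{i},U_{ij})$. So $\ell^{\mathrm{base}}_{ij}$ is unchanged; the point to make explicit is that the phase channel does not feed back into $\mathbf{q},\mathbf{k}$ within the same logit computation. The evidence score $E^{\mathrm{evid}}_{ij}$ comes from external sources and is invariant by hypothesis.

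\textbf{Step 2: the consistency score and the cycle risk.} For $C^{G}_{ij}$, I would invoke Theorem~\ref{thm:gauge} directly: $r_{ij}'=r_{ij}$, so $C^{G}_{ij}$ is unchanged. For $R^{\mathrm{ctx}}_{ij}$, which is a mean of $R^{\mathrm{cyc}}_{C}$ over short cycles through the edge, I would show $W_{C}'=W_{C}$. In the product $\prod_{t}h_{i_{t}}U_{i_{t}i_{t+1}}h_{i_{t+1}}^{-1}$, commutativity lets the gauge factors telescope, leaving $h_{i_{0}}h_{i_{m}}^{-1}=1$ because $i_{m}=i_{0}$. Each $R^{\mathrm{cyc}}_{C}$ is therefore invariant, and so is their average. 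The set of cycles containing the edge depends only on the graph, not on the gauge.

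\textbf{Step 3: the hierarchy and direction terms.} Here the statement places its assumption: these scores must be constructed from gauge-invariant quantities. Applied literally to \eqref{eq:soft-ancestor} or \eqref{eq:direction}, a local gauge would in general change them. For example, $\dG(U_{ij},D_{S})$ changes to $\dG(h_{i}U_{ij}h_{j}^{-1},D_{S})$. I would therefore read the hypothesis as requiring each such score $F$ to factor as $F=\Phi(\text{invariants})$, for instance through residuals $r_{ij}$ or holonomies $W_{C}$. Invariance is then immediate by Steps 1 and 2. Alternatively, one can restrict to global gauges $h_{i}\equiv h$ and verify $H$ and $A^{\mathrm{dir}}$ directly. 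The main obstacle is making this hypothesis precise without trivializing the theorem. I would first try stating it as ``each bias is a function of $\{r_{e}\}$, $\{W_{C}\}$ and gauge-neutral data,'' and then remark which concrete instantiations satisfy it.

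Summing the invariant terms with fixed coefficients gives $\ell^{\mathrm{MP}\prime}_{ij}=\ell^{\mathrm{MP}}_{ij}$. As a corollary, $A^{\mathrm{MP}}$ and $Y$ in \eqref{eq:mp-attention} are also invariant.
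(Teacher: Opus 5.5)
Your proposal is correct. It is essentially the termwise argument the paper leaves implicit, since the paper states this theorem without proof. That argument uses Theorem~\ref{thm:gauge} for $C^{G}_{ij}$, the abelian invariance of $W_{C}$ noted after \eqref{eq:wilson} for $R^{\mathrm{ctx}}_{ij}$, the fact that $\ell^{\mathrm{base}}_{ij}$ and $E^{\mathrm{evid}}_{ij}$ do not depend on the phase channel, and the hypothesis for the remaining terms. Your further observation is accurate: taken literally, \eqref{eq:soft-ancestor} and \eqref{eq:direction} are in general invariant only under global gauges $h_{i}\equiv h$, and this is exactly why the theorem has to be stated conditionally.
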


\begin{theorem}[Boundedness of the output norm]\label{thm:bounded}
If every value vector satisfies $\lVert v_{j}\rVert_{2}\le B_{V}$, then
for every row
\begin{equation}\label{eq:bounded}
\lVert Y_{i}\rVert_{2}\le B_{V}.
\end{equation}
\end{theorem}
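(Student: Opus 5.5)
The plan is to show that each output row $Y_{i}$ is a convex combination of the value vectors, and then apply the triangle inequality together with homogeneity of the norm. The finite-group bias terms play no role beyond making the logits finite reals. I will first observe that every term added in \eqref{eq:mp-logit} is a finite real number. $C^{G}_{ij}\in(0,1]$ by \eqref{eq:consistency}, $H(i\to j)\in[0,1]$ by \eqref{eq:soft-ancestor}, and $A^{\mathrm{dir}}_{ij}\in[-1,1]$ by \eqref{eq:direction}. $R^{\mathrm{ctx}}_{ij}$ is a mean of quantities in $[0,1]$ by \eqref{eq:cycle-risk}, and $E^{\mathrm{evid}}_{ij}\in[0,1]$. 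The coefficients are finite softplus outputs. Hence $\ell^{\mathrm{MP}}_{ij}$ is finite on every unmasked pair. This boundedness is not strictly needed for the norm bound, but it guarantees that the softmax in \eqref{eq:mp-attention} is well defined.

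Next I will record the two properties of the attention weights from \eqref{eq:mp-attention}. We have $A^{\mathrm{MP}}_{ij}\ge 0$, with $A^{\mathrm{MP}}_{ij}=0$ for masked $j$, and $\sum_{j}A^{\mathrm{MP}}_{ij}=1$ over the unmasked set. For the normalization to make sense, the set $\{j': M_{ij'}=0\}$ must be nonempty. I will state this as the standing assumption that every query attends to at least one key, which is automatic for a causal mask since $j=i$ is allowed. Convention matters here. If masked positions are encoded by $M_{ij}=-\infty$, then the exponentials vanish and the restricted sum coincides with the full softmax. Either way, row $i$ of $A^{\mathrm{MP}}$ is a probability vector supported on unmasked keys.

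Then $Y_{i}=\sum_{j}A^{\mathrm{MP}}_{ij}v_{j}$, and the triangle inequality gives
\[
\lVert Y_{i}\rVert_{2}\le\sum_{j}A^{\mathrm{MP}}_{ij}\lVert v_{j}\rVert_{2}\le B_{V}\sum_{j}A^{\mathrm{MP}}_{ij}=B_{V}.
\]
Equivalently, $Y_{i}$ lies in the convex hull of $\{v_{j}\}$, which is contained in the closed ball of radius $B_{V}$ because the ball is convex.

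The main obstacle is the bookkeeping about the mask convention rather than any estimate. The denominator in \eqref{eq:mp-attention} sums only over unmasked $j'$, while $Y=A^{\mathrm{MP}}V$ sums over all $j$. I will therefore make explicit that $A^{\mathrm{MP}}_{ij}$ is defined to be $0$ for masked $j$, and I will handle the degenerate fully masked row by assumption. I will also remark that the bound is independent of the phase-group parameters $q$ and $p$ and of the magnitudes of the $\alpha_{\bullet}$.
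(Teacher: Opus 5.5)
Your proposal is correct and matches the paper's proof, which simply notes that the softmax weights are nonnegative and sum to one, so $Y_{i}$ is a convex combination of the value vectors. Your extra bookkeeping makes explicit what the paper leaves implicit: finiteness of the logits, setting $A^{\mathrm{MP}}_{ij}=0$ for masked $j$, and assuming each row has at least one unmasked key.
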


\begin{proof}
The softmax weights are nonnegative and sum to one, so $Y_{i}$ is a
convex combination of the value vectors.
\end{proof}

Because the hard quantization map is discontinuous at the cell
boundaries, we do not claim global Lipschitz continuity for the model as
a whole. Within the range of soft distributions with $\tau>0$, bounded
logits, and finite-distance kernels, a local Lipschitz estimate is
available for the relaxed model used during training.

\section{Connection to Hallucination Risk}\label{sec:risk}

The decomposition of QuantumPhaseNet into directional deviation, path
deviation, curvature deviation, insufficient evidence, and semantic
uncertainty (Kasubuchi and Fukiya 2026b) is made concrete in the
finite-group version as
\begin{equation}\label{eq:risk}
\mathcal{R}^{\mathrm{MP}}_{t}
=\beta_{h}\left(1-H_{t}\right)
+\beta_{d}\left(1-A^{\mathrm{dir}}_{t}\right)
+\beta_{c}R^{\mathrm{cyc}}_{t}
+\beta_{e}R^{\mathrm{evid}}_{t}
+\beta_{u}\mathbb{H}(\mathbf{p}_{t}),
\end{equation}
where $\mathbb{H}(\mathbf{p}_{t})$ is the entropy over semantic classes
or over the phase group, not over token strings.

By definition $\mathcal{R}^{\mathrm{MP}}_{t}$ is a score and not a true
error probability. On a calibration set we fit
\begin{equation}\label{eq:calibration}
\widehat{P}\left(\mathrm{error}_{t}=1\mid\mathcal{R}_{t}\right)
=\sigma\!\left(a\mathcal{R}_{t}+b\right)
\end{equation}
or an isotonic regression, and evaluate it with the Brier score, the
expected calibration error (ECE), AUROC, AUPRC, and risk--coverage
curves. When the threshold is exceeded, the system chooses among
answering, re-retrieving, regenerating, and abstaining. If a guarantee
is to be claimed, the assumptions of conformal abstention must be stated
separately (Abbasi-Yadkori et al.~2024).

\section{Training Objective}\label{sec:objective}

The overall loss is
\begin{equation}\label{eq:total-loss}
\mathcal{L}=\mathcal{L}_{\mathrm{LM}}
+\lambda_{\phi}\mathcal{L}_{\phi}
+\lambda_{h}\mathcal{L}_{\mathrm{hier}}
+\lambda_{d}\mathcal{L}_{\mathrm{dir}}
+\lambda_{c}\mathcal{L}_{\mathrm{cyc}}
+\lambda_{g}\mathcal{L}_{\mathrm{gauge}}
+\lambda_{e}\mathcal{L}_{\mathrm{evid}}
+\lambda_{\mathrm{cal}}\mathcal{L}_{\mathrm{cal}}
+\lambda_{\mathrm{ent}}\mathcal{L}_{\mathrm{ent}} .
\end{equation}

\subsection{Phase and Connection Losses}

For positive edges $E^{+}$ and negative edges $E^{-}$ we use
\begin{equation}\label{eq:phase-loss}
\mathcal{L}_{\phi}
=\sum_{(i,j)\in E^{+}}\left(1-C^{G}_{ij}\right)
+\sum_{(i,j)\in E^{-}}\max\!\left(0,\;C^{G}_{ij}-m_{\phi}\right).
\end{equation}
This encourages consistency between the node phases and the
independently predicted edge connections; but since forcing them always
to agree would make the cycle test trivial, $\lambda_{\phi}$ must not be
set too large.

\subsection{Hierarchy Loss}

For supervised ancestor pairs $P^{+}$ and non-ancestor pairs $P^{-}$,
\begin{equation}\label{eq:hier-loss}
\mathcal{L}_{\mathrm{hier}}
=-\sum_{(u,v)\in P^{+}}\log H(u\to v)
-\sum_{(u,v)\in P^{-}}\log\left(1-H(u\to v)\right)
+\lambda_{\mathrm{depth}}\mathcal{L}_{\mathrm{depth}} .
\end{equation}
Reversed negatives, the transitive closure, and synonym equivalence
classes are evaluated separately.

\subsection{Direction, Evidence, and Calibration}

Against a supervised direction $D_{S}^{*}$ we use
\begin{equation}\label{eq:dir-loss}
\mathcal{L}_{\mathrm{dir}}=\frac{2\,\dG(D_{S},D_{S}^{*})}{q} .
\end{equation}
Evidence support is trained with a claim--evidence binary cross entropy,
calibration with a Brier loss, and phase collapse is suppressed with an
entropy floor or a code-usage regularizer.

\section{The Two-Layer Structure of Training and Inference}
\label{sec:two-layer}

Modular exponentiation $g^{a}\bmod p$, the arg-max, and equality
comparison are not differentiable. Leaving this point vague would be a
serious weakness at review time with respect to end-to-end trainability.
We therefore state the two-layer structure explicitly.

\subsection{Training Time}

\begin{enumerate}
\item Phases are kept soft as $\mathbf{p}_{i}(a)$ and depths as
$\mathbf{r}_{i}(\ell)$.
\item The default is a straight-through estimator whose forward pass
uses hard one-hot values and whose backward pass uses the soft
probabilities.
\item Quotient projection is computed with the folding matrix of
\eqref{eq:folding}.
\item Cycle products are computed by cyclic convolution of the group
distributions.
\item The temperature $\tau$ is annealed downward in stages and code
usage is monitored.
\end{enumerate}

\subsection{Inference Time}

\begin{enumerate}
\item Fix $a=\argmaxop \mathbf{p}(a)$.
\item Look up $z=\operatorname{pow}(g,a,p)$.
\item Compute relative phases, quotient projections, and cycle products
with integer arithmetic.
\item Convert the learned group kernels into real-valued biases.
\item Choose among answering, re-retrieving, regenerating, and
abstaining according to the calibrated risk.
\end{enumerate}

\subsection{Limited Use of the NTT}

When the sequence length satisfies $N\mid(p-1)$, an $N$-th primitive
root $\omega$ allows one to compute
\begin{equation}\label{eq:ntt}
\hat{x}_{k}=\sum_{n=0}^{N-1}x_{n}\omega^{kn}\bmod p,
\qquad
x_{n}=N^{-1}\sum_{k=0}^{N-1}\hat{x}_{k}\omega^{-kn}\bmod p .
\end{equation}
The NTT is a candidate implementation for performing the cyclic
convolution of quantized group distributions in $O(q\log q)$; the
underlying harmonic analysis on finite groups is classical (Terras 1999;
Serre 1977). We do not, however, give the coefficients over
$\mathbb{F}_{p}$ the same energy interpretation as complex amplitudes;
low-frequency semantics are verified either by supervision or by a
spectral module on the real-valued side.

\section{Implementation Algorithms}\label{sec:implementation}

\subsection{Minimum Viable Implementation}

The recommended initial setting is $p=257$, $q=256$, $g=3$. Here $3$ is
an element of order $256$ in $\mathbb{F}_{257}^{\times}$, which permits
$q_{\ell}=2^{\ell}$ for $\ell=0,\dots,8$. In a first experiment intended
to keep the training cost low, one may start from $q=32$ or $q=64$.

\begin{enumerate}
\item Insert phase adapters into the top two to four layers of a
pretrained Transformer.
\item Emit phase logits and depth logits for each token.
\item Predict edge connections for a local window, for claim--evidence
edges, and for inter-sentence edges.
\item Sample only triangular cycles and short fundamental cycles.
\item Add the correction of \eqref{eq:mp-logit} to the attention logit.
\item Freeze the backbone at first and train only the adapters, then
apply LoRA or full fine-tuning.
\item Train the risk calibrator with the training, calibration, and
final evaluation splits kept separate.
\end{enumerate}

\subsection{Scalable Implementation}

A full soft distribution requires $O(Nq)$ memory together with pairwise
computation. For long texts we use the following.

\begin{enumerate}
\item Hard phases with straight-through gradients.
\item Precomputed quotient codes and integer comparison.
\item Restriction of the cycle loss to a local window or to the edges of
the retrieval graph.
\item Restriction of the hierarchy bias to sparse blocks or to the
top-$k$ attention candidates.
\item Splitting into several small group heads
($q_{h}=16,32,64$).
\item Tabulation of exponentials, inverses, projections, and kernels as
lookup tables.
\end{enumerate}

\section{Computational Complexity}\label{sec:complexity}

Standard dense attention takes time $O(N^{2}d)$ and $O(N^{2})$ memory
for the attention matrix. ModularPhaseNet does not improve this
complexity automatically. For hard-phase inference with $H$ heads and
$L$ hierarchy levels, the additional cost is roughly
\begin{equation}\label{eq:complexity}
T_{\mathrm{MP}}
=O\!\left(NHd_{\phi}\right)
+O(NHL)
+O\!\left(N^{2}H\right)
+O\!\left(\lvert\mathcal{C}\rvert H\right).
\end{equation}
The pair bias term $O(N^{2}H)$ is of lower order than the $O(N^{2}d)$ of
existing attention whenever $H\ll d$. Soft training adds $O(NHq)$ for
the phase logits and up to $O(N^{2}Hq)$ for a dense pair distribution,
so the minimum viable implementation uses a small $q$ or a local window.

Computing the modular exponential every time would require $O(\log q)$
modular multiplications per call, but since the whole range of exponent
labels is small, the implementation precomputes a table of $q$ entries
and performs an $O(1)$ lookup. The value here lies not in acceleration
but in the inductive bias provided by exact discrete composition,
reproducibility, quotient projection, and cycle checking.

\section{Numerical Examples}\label{sec:numerical}

\subsection{Hierarchical Addresses}

Let $p=17$, $q=16$, and $g=3$. The order of $g$ is $16$, and we take
$q_{1}=2$, $q_{2}=4$, $q_{3}=8$, $q_{4}=16$. Fine exponents and their
projections onto each quotient level are assigned as follows.

\begin{itemize}
\item \textbf{dog}: $a=5$, $z=5$; the projections for $q=2,4,8,16$ are
$(16,13,8,5)$.
\item \textbf{wolf}: $a=13$, $z=12$; the projections are
$(16,13,8,12)$.
\item \textbf{cat}: $a=9$, $z=14$; the projections are $(16,13,9,14)$.
\item \textbf{lion}: $a=1$, $z=3$; the projections are $(16,13,9,3)$.
\end{itemize}

At $q=2$ the four concepts fall into the same \emph{animal} class; at
$q=4$ they fall into the same \emph{mammal} class; at $q=8$ they split
into \{dog, wolf\} and \{cat, lion\}; and at $q=16$ they become
individual concepts. If the depth of each concept is recorded alongside
the address, the asymmetric ancestor relation animal $\preceq$ mammal
$\preceq$ canine $\preceq$ dog is expressed by \eqref{eq:ancestor}.

\subsection{Cycle Consistency}

For a triangular cycle, take
\begin{equation}\label{eq:cycle-example}
U_{12}=3^{2}=9,\qquad U_{23}=3^{3}=10,\qquad U_{31}=3^{11}=7
\pmod{17},
\end{equation}
so that
\begin{equation}\label{eq:cycle-ok}
W_{C}=9\cdot10\cdot7\equiv1\pmod{17},
\end{equation}
which is consistent. If the last edge is mispredicted as
$U_{31}=3^{12}=4$, then
\begin{equation}\label{eq:cycle-bad}
W_{C}=9\cdot10\cdot4\equiv3\not\equiv1\pmod{17},
\end{equation}
and one detects that no global phase assignment satisfying all three
local inferences simultaneously exists.

\section{Evaluation Plan}\label{sec:plan}

The evaluation described in this section has not yet been carried out.
We state it here as a pre-registered plan, so that the research
questions, datasets, baselines, ablations, and statistical design are
fixed before any result is observed.

\subsection{Research Questions}\label{sec:rq}

\begin{enumerate}
\item \textbf{RQ1 (hierarchy).} Can the quotient-group filtration
recover hypernym--hyponym relations and conceptual depth on HyperLex and
WordNet?
\item \textbf{RQ2 (direction).} Do oriented group elements predict
sentence order, instruction following, and maintenance of the line of
argument better than ordinary cosine similarity?
\item \textbf{RQ3 (consistency).} Does the cycle risk detect multi-hop
contradictions that a local entailment score alone would miss?
\item \textbf{RQ4 (hallucination).} Does \eqref{eq:risk} calibrate error
better than maximum softmax probability, entropy, SelfCheckGPT (Manakul,
Liusie, and Gales 2023), and semantic entropy (Farquhar et al.~2024)?
\item \textbf{RQ5 (discretization).} Does the performance gap to
continuous QuantumPhaseNet narrow as $q$ increases, and at which $q$
does the elbow between accuracy and computational cost occur?
\item \textbf{RQ6 (value of modular exponentiation).} Is there any
difference among plain angle binning, a learned categorical code, and a
modular-exponentiation checksum?
\end{enumerate}

\subsection{Datasets and Metrics}

\begin{itemize}
\item \textbf{Concept hierarchy.} HyperLex (Vuli\'c et al.~2017) and
WordNet hypernym pairs, measured by Spearman $\rho$, ancestor F1, depth
MAE, and the rate of transitivity violations.
\item \textbf{Contextual consistency.} FEVER (Thorne et al.~2018), NLI,
and synthetic multi-hop cycles, measured by AUROC, macro-F1, and
cycle-risk separation.
\item \textbf{Factuality.} TruthfulQA (Lin, Hilton, and Evans 2022),
FActScore (Min et al.~2023), and LongFact/SAFE (Wei et al.~2024),
measured by factual precision, citation support, and claim recall.
\item \textbf{Calibration.} Brier score, ECE, AUPRC, risk--coverage, and
abstention utility against the error labels above.
\item \textbf{Efficiency.} Tokens per second, peak memory, phase
overhead, and latency, on identical hardware and token counts.
\end{itemize}

\subsection{Comparison Models}

\begin{enumerate}
\item A standard Transformer.
\item A Transformer with RoPE.
\item FNet or a WavePhaseNet module.
\item The classical quantum-inspired version of QuantumPhaseNet.
\item The phase categorical code alone.
\item The hierarchy component alone.
\item The direction component alone.
\item The cycle-consistency component alone.
\item Full ModPhase Attention.
\item A baseline that uses Order Embeddings or Poincar\'e Embeddings as
an auxiliary loss.
\end{enumerate}

\subsection{Required Ablations}

\begin{enumerate}
\item $q\in\{8,16,32,64,128,256\}$.
\item A single group head versus multi-axis group heads.
\item Hard, soft, and straight-through estimators.
\item Edges generated automatically from the node phases versus edges
predicted independently.
\item No cycle loss.
\item No quotient hierarchy.
\item A change of generator, $g\mapsto g^{u}$.
\item Random permutation of the phase labels.
\item Reversal of direction, $U_{ij}\mapsto U_{ij}^{-1}$.
\item Removal of the evidence and calibration terms.
\end{enumerate}

\subsection{Statistical Design}

All comparisons use the same tokenizer, data order, parameter budget,
and number of training tokens. At least five seeds are used, and paired
bootstrap confidence intervals and effect sizes are reported. Multiple
comparisons are corrected by the Holm procedure or an equivalent method,
and additional FLOPs, memory, and latency are reported alongside
performance rather than performance alone. The training set of the risk
calibrator is kept separate from the final evaluation set.

\section{Falsification Criteria}\label{sec:falsification}

If any of the following is observed reproducibly, the corresponding part
of the central hypothesis is rejected or narrowed.

\begin{enumerate}
\item The quotient-group hierarchy fails to outperform a plain
categorical embedding or an Order Embedding.
\item Semantic performance varies substantially with a change of
generator, so that coordinate dependence cannot be controlled.
\item The cycle risk fails to correlate with contradiction labels and
cannot be distinguished from a phase-shuffled control.
\item Increasing $q$ fails to improve the approximation to the
continuous version.
\item The Brier score and ECE of the hallucination risk fail to
outperform an entropy baseline.
\item The improvement is explained entirely by an increase in the number
of parameters or by stronger retrieval.
\item The improvement in factuality per unit of additional computation
is smaller than that of a verifier or reranker under the same compute
budget.
\end{enumerate}

\section{Anticipated Reviewer Comments and Responses}\label{sec:reviews}

\begin{enumerate}
\item \emph{``The topology of a finite group is discrete and therefore
trivial.''} We restrict the claim of novelty: it lies not in the
topology alone but in the integration of the Cayley metric, the quotient
filtration, the group-valued connection, and the cycle holonomy.
\item \emph{``$C_{q}$ and $\mathbb{Z}/q\mathbb{Z}$ are isomorphic, so
modular exponentiation is unnecessary.''} We claim no increase in
representational power. Modular exponentiation is an implementation
representation that provides exact composition, subgroup projection,
NTT compatibility, checksums, and reproducibility. A plain residue
baseline is always included.
\item \emph{``Modular exponentiation is not differentiable.''} We
separate the group-algebra distribution and straight-through estimator
used during training from the hard operations used at inference, and we
ablate the gradient estimator.
\item \emph{``If the cycle product is built from node differences it is
always the identity.''} Edge connections are predicted independently,
and trivial node-derived edges are reported as a negative control.
\item \emph{``A finite field has no natural notion of low frequency or
amplitude.''} We assume no semantic low/high ordering of the NTT index.
Hierarchy is defined from quotient supervision, direction from oriented
edges, and consistency from holonomy.
\item \emph{``A regular cyclic quotient cannot represent an irregular
ontology.''} We use multiple heads, unused codes, depth masks, and
product codes, and we compare against Order and Poincar\'e baselines. No
claim of universal representability is made.
\item \emph{``Reduction of hallucination does not follow from the
mathematics.''} The risk is a pre-calibration score, and the reduction
of factual error is evaluated as an empirical hypothesis.
\end{enumerate}

\section{Limitations}\label{sec:limitations}

\begin{enumerate}
\item \textbf{Representational isomorphism.} The modular-exponential
representation is isomorphic to the exponent-residue representation and
does not by itself create new representational capacity.
\item \textbf{Discretization error.} For small $q$, aliasing at the
phase boundaries and code collisions occur.
\item \textbf{Coordinate dependence.} The word metric and the signed
direction depend on the generator. Reconciling full automorphism
invariance with a meaningful semantic direction requires additional
design.
\item \textbf{Regularity of the hierarchy.} A single chain of cyclic
quotients assumes regular branching. An irregular ontology with multiple
inheritance requires multiple heads or a combination with another
geometry.
\item \textbf{Optimization.} Training can become unstable because of
hard codes, temperature annealing, and codebook collapse.
\item \textbf{Complexity.} A dense pair bias does not remove the
quadratic complexity of attention.
\item \textbf{Semantic identification.} The numerical value of a group
code carries no intrinsic natural-language meaning; it is identified
only through supervision and through the invariants.
\item \textbf{Hallucination.} Group consistency alone cannot explain
every cause, including missing knowledge, retrieval failure, and
citation error. Evidence and calibration remain necessary.
\end{enumerate}

\section{Conclusion}\label{sec:conclusion}

ModularPhaseNet discretizes the continuous complex phase of
QuantumPhaseNet into a finite cyclic group channel that can coexist with
a real-valued Transformer. Concept hierarchy is represented by the
nested partition of quotient groups together with depth, semantic
direction by oriented edge group elements, and contextual consistency by
cycle holonomy. The residual and the cycle product, both invariant under
the local gauge, can be computed exactly with integer modular
exponentiation, and the results are returned to standard attention as
real-valued biases.

The theoretical value of this construction lies in the fact that whether
local semantic transitions can be integrated into a single global
semantic assignment becomes checkable on a finite group. Its practical
value lies in making hierarchy, direction, contradiction, evidence, and
uncertainty observable within one and the same model, without requiring
a complex-valued model or quantum hardware. The reduction of
hallucination is not a theorem but a calibratable empirical hypothesis
whose inputs are exact invariants, and it is to be verified as such.

\section*{References}
\addcontentsline{toc}{section}{References}

\begingroup
\setlength{\parindent}{0pt}
\setlength{\parskip}{6pt}

Abbasi-Yadkori, Yasin, Ilja Kuzborskij, David Stutz, et al.~2024.
``Mitigating LLM Hallucinations via Conformal Abstention.''
arXiv:2405.01563.

Farquhar, Sebastian, Jannik Kossen, Lorenz Kuhn, and Yarin Gal. 2024.
``Detecting Hallucinations in Large Language Models Using Semantic
Entropy.'' \emph{Nature} 630: 625--630.

Gromov, Andrey. 2023. ``Grokking Modular Arithmetic.''
arXiv:2301.02679.

Ireland, Kenneth, and Michael Rosen. 1990. \emph{A Classical
Introduction to Modern Number Theory}. 2nd ed. New York: Springer.

Kasubuchi, Kiyotaka, and Kazuo Fukiya. 2026a. ``WavePhaseNet: A
DFT-Based Method for Constructing Semantic Conceptual Hierarchy
Structures (SCHS).'' arXiv:2602.14419.

Kasubuchi, Kiyotaka, and Kazuo Fukiya. 2026b. ``QuantumPhaseNet: A
Gauge-Covariant Geometric and Quantum-Spectral Theory of Semantic
Concept Hierarchies with Prototype Validation of a Classical
Quantum-Inspired Model.'' arXiv:2608.15820.

Lee-Thorp, James, Joshua Ainslie, Ilya Eckstein, and Santiago
Onta\~n\'on. 2022. ``FNet: Mixing Tokens with Fourier Transforms.''
\emph{NAACL 2022}. arXiv:2105.03824.

Lin, Stephanie, Jacob Hilton, and Owain Evans. 2022. ``TruthfulQA:
Measuring How Models Mimic Human Falsehoods.'' \emph{ACL 2022},
3214--3252. arXiv:2109.07958.

Manakul, Potsawee, Adian Liusie, and Mark J. F. Gales. 2023.
``SelfCheckGPT: Zero-Resource Black-Box Hallucination Detection for
Generative Large Language Models.'' \emph{EMNLP 2023}, 9004--9017.
arXiv:2303.08896.

Min, Sewon, Kalpesh Krishna, Xinxi Lyu, et al.~2023. ``FActScore:
Fine-Grained Atomic Evaluation of Factual Precision in Long Form Text
Generation.'' \emph{EMNLP 2023}. arXiv:2305.14251.

Nickel, Maximilian, and Douwe Kiela. 2017. ``Poincar\'e Embeddings for
Learning Hierarchical Representations.'' \emph{NeurIPS 2017}.
arXiv:1705.08039.

Pedrouzo-Ulloa, Alberto, Juan Ram\'on Troncoso-Pastoriza, and Fernando
P\'erez-Gonz\'alez. 2017. ``Number Theoretic Transforms for Secure
Signal Processing.'' \emph{IEEE Transactions on Information Forensics
and Security}. arXiv:1607.05229.

Serre, Jean-Pierre. 1977. \emph{Linear Representations of Finite
Groups}. New York: Springer.

Su, Jianlin, Yu Lu, Shengfeng Pan, et al.~2023. ``RoFormer: Enhanced
Transformer with Rotary Position Embedding.'' \emph{Neurocomputing} 568.
arXiv:2104.09864.

Terras, Audrey. 1999. \emph{Fourier Analysis on Finite Groups and
Applications}. Cambridge: Cambridge University Press.

Thorne, James, Andreas Vlachos, Christos Christodoulopoulos, and Arpit
Mittal. 2018. ``FEVER: A Large-Scale Dataset for Fact Extraction and
VERification.'' \emph{NAACL 2018}. arXiv:1803.05355.

Vaswani, Ashish, Noam Shazeer, Niki Parmar, et al.~2017. ``Attention Is
All You Need.'' \emph{NeurIPS} 30. arXiv:1706.03762.

Vendrov, Ivan, Ryan Kiros, Sanja Fidler, and Raquel Urtasun. 2016.
``Order-Embeddings of Images and Language.'' \emph{ICLR 2016}.
arXiv:1511.06361.

Vuli\'c, Ivan, Daniela Gerz, Douwe Kiela, Felix Hill, and Anna Korhonen.
2017. ``HyperLex: A Large-Scale Evaluation of Graded Lexical
Entailment.'' \emph{Computational Linguistics} 43 (4): 781--835.
arXiv:1608.02117.

Wei, Jerry, Chengrun Yang, Xinying Song, et al.~2024. ``Long-Form
Factuality in Large Language Models.'' arXiv:2403.18802.

\endgroup

\end{document}